\documentclass[10pt,twocolumn,letterpaper]{article}

\usepackage{cvpr}
\usepackage{times}
\usepackage{epsfig}
\usepackage{epstopdf}
\usepackage{float}
\usepackage{graphicx}
\usepackage{amsmath}
\usepackage{amsthm}
\usepackage{amssymb}
\usepackage{booktabs}
\usepackage[lined,boxed,commentsnumbered,ruled]{algorithm2e}


\newcommand{\R}[1]{\mathbb{R}^{#1}}
\newcommand{\RR}[2]{\mathbb{R}^{#1 \times #2}}

\newcommand{\conv}[1]{\mbox{conv}\left(#1\right)}
\newcommand{\diag}{\mbox{diag}}
\newcommand{\proj}{\mathcal{P}}

\newcommand{\prox}{\mbox{prox}}
\newcommand{\st}{\mbox{s.t.~}}

\newcommand{\refLemma}[1]{Lemma~\ref{#1}}
\newcommand{\refEq}[1]{(\ref{#1})}
\newcommand{\refFig}[1]{Figure~\ref{#1}}
\newcommand{\refProp}[1]{Proposition~\ref{#1}}

\newcommand{\refSec}[1]{Section~\ref{#1}}

\newcommand{\refTab}[1]{Table~\ref{#1}}

\newtheorem{lemma}{Lemma}
\newtheorem{proposition}{Proposition}

\def\bfc{{\boldsymbol{c}}}

\def\bfB{{\boldsymbol{B}}}
\def\bfC{{\boldsymbol{C}}}

\def\bfI{{\boldsymbol{I}}}

\def\bfM{{\boldsymbol{M}}}

\def\bfQ{{\boldsymbol{Q}}}
\def\bfR{{\boldsymbol{R}}}
\def\bfS{{\boldsymbol{S}}}
\def\bfT{{\boldsymbol{T}}}
\def\bfU{{\boldsymbol{U}}}
\def\bfV{{\boldsymbol{V}}}
\def\bfW{{\boldsymbol{W}}}
\def\bfX{{\boldsymbol{X}}}
\def\bfY{{\boldsymbol{Y}}}
\def\bfZ{{\boldsymbol{Z}}}

\def \bfsigma {\boldsymbol{\sigma}}

\def\bfone{{\boldsymbol{1}}}
\def\half{\frac{1}{2}~}



\usepackage[pagebackref=true,breaklinks=true,letterpaper=true,colorlinks,bookmarks=false]{hyperref}

\cvprfinalcopy 


\ifcvprfinal\pagestyle{empty}\fi

\begin{document}

\title{3D Shape Estimation from 2D Landmarks: A Convex Relaxation Approach}
\author{Xiaowei Zhou$^\dag$, Spyridon Leonardos$^\dag$, Xiaoyan Hu$^{\ddag\dag}$, Kostas Daniilidis$^\dag$ \\[0ex]
$^\dag$ University of Pennsylvania \hspace{2em} $^\ddag$ Beijing Normal University\\
\small{\{xiaowz,spyridon,kostas\}@cis.upenn.edu \hspace{6em} huxy@bnu.edu.cn} \hspace{4em}
}

\maketitle

\begin{abstract}
We investigate the problem of estimating the 3D shape of an object, given a set of 2D landmarks in a single image. To alleviate the reconstruction ambiguity, a widely-used approach is to confine the unknown 3D shape within a shape space built upon existing shapes. While this approach has proven to be successful in various applications, a challenging issue remains, i.e., the joint estimation of shape parameters and camera-pose parameters requires to solve a nonconvex optimization problem. The existing methods often adopt an alternating minimization scheme to locally update the parameters, and consequently the solution is sensitive to initialization. In this paper, we propose a convex formulation to address this problem and develop an efficient algorithm to solve the proposed convex program. We demonstrate the exact recovery property of the proposed method, its merits compared to alternative methods, and the applicability in human pose and car shape estimation.
\end{abstract}

\maketitle

\section{Introduction}

Recognizing 3D objects from 2D images is a central problem in computer vision. In recent years, there has been an emerging trend towards analyzing 3D geometry of objects including shapes and poses instead of merely providing bounding boxes \cite{xiang2012estimating,lim2013parsing,aubry2014seeing,mori2006recovering,wei2009modeling,sigal2012loose}. The 3D geometric reasoning can not only provide richer information about the scene for subsequent high-level tasks, but also improve the performance of object detection. \cite{fidler20123d,pepik20123d2pm,andriluka2010monocular,simo2013joint}.

Estimating the 3D geometry of an object from a single view is an ill-posed problem. But it is a possible task for a human observer, since human can leverage visual memory of object shapes. Inspired by this idea, more and more efforts have been made towards 3D model-based analysis leveraging the increasing availability of online 3D model databases. To address intra-class variability or nonrigid deformation, many recent works, e.g., \cite{hejrati2012analyzing},\cite{ramakrishna2012reconstructing},\cite{zia2013detailed},\cite{wang2014robust}, have adopted a shape-space approach originated from the ``active shape model" \cite{cootes1995active}, where each shape is defined by a set of ordered landmarks and the shape to be estimated is assumed to be a linear combination of predefined basis shapes. For estimation, the 3D shape model is fitted to the landmarks annotated or detected in images. In this way, the problem turns into a 3D-to-2D shape fitting problem, where the shape parameters (weights of the linear combination) and the pose parameters (viewpoint) have to be estimated simultaneously.

While this approach has achieved promising results in various applications, the model inference is still a challenging problem, since the subproblems of shape and pose estimation are coupled: the pose needs to be known to deform the 3D model to match 2D landmarks, while the exact 3D model is required to estimate the pose. The joint estimation of shape and pose parameters usually results in a nonconvex optimization problem, and the orthogonality constraint on the pose parameters makes the problem more complicated. The previous methods often adopted an alternating scheme to alternately update the shape and pose parameters until convergence. Therefore, the algorithms were sensitive to initialization and may get stuck at locally-optimal solutions. As mentioned in many works, e.g., \cite{ramakrishna2012reconstructing},\cite{hejrati2012analyzing}, most of the failed cases were attributed to bad initialization. Some heuristics have been used to relieve this issue, such as trying multiple initializations \cite{wang2014robust} or using a viewpoint-aware detector for pose initialization \cite{zia2013detailed}. But there is still no guarantee for global optimality.

In this paper, we propose a convex relaxation approach to addressing the aforementioned issue:
\begin{itemize}
\item[1.] We use an augmented shape-space model, where a shape is represented as a linear combination of rotatable basis shapes. This model can give a linear representation of both intrinsic shape deformation and extrinsic viewpoint changes.
\item[2.] We use the convex relaxation of the orthogonality constraint to convert the entire problem into a spectral-norm regularized linear inverse problem, which is a convex program.
\item[3.] We develop an efficient algorithm to globally solve the proposed convex program.
\end{itemize}

The remainder of this paper is organized as follows. We first give a brief introduction to related work in \refSec{sec:related}. Then, we explain the formulation in \refSec{sec:formulation} and provide the algorithm in \refSec{sec:optimization}. Next, we experimentally demonstrate the merits and applicability of the proposed method in \refSec{sec:experiments}. Finally, we conclude the paper with some discussions in \refSec{sec:discussion}.

\section{Related Work}\label{sec:related}

The most related work includes the papers that solve shape estimation by fitting a shape-space model to 2D landmarks. This approach has been successfully applied to reconstruction of a variety of objects including human poses \cite{ramakrishna2012reconstructing,wang2014robust,fan2014pose,zhou2014sptio}, cars \cite{li2011robustly,hejrati2012analyzing,zia2013detailed,lin2014jointly}, faces \cite{blanz2003face,gu20063d,cao20133d}, to name a few. Following are a few recent examples.

Ramakrishna et al. \cite{ramakrishna2012reconstructing} proposed a sparse representation based approach to reconstructing 3D human pose from annotated landmarks in a still image. Wang et al. \cite{wang2014robust} adopted a 2D human pose detector \cite{yang2011articulated} to automatically locate the joints and used a robust estimator to handle inaccurate joint locations. Fan et al. \cite{fan2014pose} proposed to improve the performance of \cite{ramakrishna2012reconstructing} by enforcing locality when building the pose dictionary. Hejrati et al. \cite{hejrati2012analyzing} used the active shape model for 3D car reconstruction and produced 2D landmarks by a variant of deformable part models \cite{felzenszwalb2008discriminatively}. Lin et al. \cite{lin2014jointly} proposed a method for joint 3D model fitting and fine-grained classification for cars. In some works, landmark locations were estimated jointly with shape fitting. For example, Zia \cite{zia2013detailed} et al. developed a probabilistic framework to simultaneously localize 2D landmarks and recovery 3D object models. Zhou et al. \cite{zhou2014sptio} formulated human pose estimation as a matching problem, where the learned spatio-temporal pose model was matched to extracted trajectories in a video.

A common component or an intermediate step in these works is the 3D model fitting to 2D landmarks. As mentioned in the introduction, the previous work usually relied on nonconvex formulations, which may be sensitive to initialization. The convex formulation proposed in this paper can potentially serve as a building block to improve the performance of the existing methods.

Our work is also closely related to nonrigid structure from motion (NRSfM), where a deformable shape is recovered from multi-frame 2D-2D correspondences. The low-rank shape-space model has been frequently used in NRSfM, but the basis shapes are unknown. The joint estimation of shape/pose variables and basis shapes is typically solved via matrix factorization followed by metric rectification \cite{bregler2000recovering,xiao2006closed}. In some recent works, iterative algorithms were employed for better precision \cite{paladini2012optimal,del2012bilinear} or sequential processing \cite{agudo2014good}, and the problem studied in this paper is analogous to the step of fixing basis shapes and updating the remaining variables in those iterative methods for NRSfM.

\section{Formulation}\label{sec:formulation}

\subsection{Problem Statement}

The problem studied in this paper can be described by the following linear system:
\begin{align}\label{eq:basic}
\bfW = \Pi\bfS,
\end{align}
where $\bfS\in\RR{3}{p}$ denotes the unknown 3D shape, which is represented by 3D locations of $p$ points. $\bfW\in\RR{2}{p}$ denotes their projections in a 2D image. $\Pi$ is the camera calibration matrix. To simplify the problem, the weak-perspective camera model is usually used, which is a good approximation when the object depth is much smaller than the distance from the camera. With this assumption, the calibration matrix has the following simple form:
\begin{align}\label{eq:calibration}
\Pi = \left(
          \begin{array}{ccc}
            \alpha & 0 & 0 \\
            0 & \alpha & 0 \\
          \end{array}
        \right),
\end{align}
where $\alpha$ is a scalar depending on the focal length and the distance to the object.

There are always more variables than equations in \refEq{eq:basic}. To make the problem well-posed, a widely-used assumption is that the unknown shape can be represented as a linear combination of predefined basis shapes, which is originated from the active shape model \cite{cootes1995active}:
\begin{align}\label{eq:shapespace}
    \bfS = \sum_{i=1}^{k} c_i\bfB_i,
\end{align}
where $\bfB_i\in\RR{3}{p}$ for $i\in[1,k]$ represents a basis shape learned from training data, while $c_i$ denotes the weight of each basis shape. In this way, the reconstruction problem is turned into a problem of estimating several coefficients by fitting the model \refEq{eq:shapespace} to the landmarks in an image, which greatly reduces the number of unknowns.

Since the basis shapes are predefined, the relative rotation and translation between the camera frame and the frame defining the basis shapes need to be taken into account, and the 3D-2D projection is depicted by:
\begin{align}\label{eq:2d3dcorresp}
\bfW = \Pi\left(\bfR\sum_{i=1}^{k} c_i\bfB_i + \bfT\bfone^T\right),
\end{align}
where $\bfR\in\RR{3}{3}$ and $\bfT\in\R{3}$ correspond to the rotation matrix and the translation vector, respectively. $\bfR$ should be in the special orthogonal group
\begin{align}
SO(3) = \{\bfR\in\RR{3}{3}|\bfR^T\bfR=\bfI_3,\det{\bfR}=1\}.
\end{align}

Equation \refEq{eq:2d3dcorresp} can be further simplified as
\begin{align}\label{eq:bilinear}
\bfW = \bar{\bfR}\sum_{i=1}^{k} c_i\bfB_i,
\end{align}
where $\bar{\bfR}\in\RR{2}{3}$ denotes the first two rows of the rotation matrix, and the translation $\bfT$ has been eliminated by centralizing the data, i.e. subtracting each row of $\bfW$ and $\bfB$ by its mean. Note that the scalar $\alpha$ in the calibration matrix has been absorbed into $c_1,\cdots,c_k$.

In the active shape model, the number of basis shapes is set to be small, which assumes that the unknown shape lies in a low-dimensional linear space. In many recent works \cite{ramakrishna2012reconstructing,zhang2011sparse,zhu2010model,zhu2014complex}, it has been shown that the low-dimensional linear space is insufficient to model complex shape variation, e.g., human poses, and a promising approach is using an over-complete dictionary and representing an unknown shape as a sparse combination of atoms in the dictionary. Such a sparse representation implicitly encodes the assumption that the unknown shape should lie in a union of subspaces that approximates a nonlinear shape manifold.

Based on the sparse representation of shapes, the following optimization problem is often considered to estimate an unknown shape:
\begin{align}\label{eq:originalnoisy}
    \min_{\bfc,\bar{\bfR}} ~~ & \half \left\| \bfW - \bar{\bfR}\sum_{i=1}^{k} c_i\bfB_i \right\|_F^2 + \lambda \|\bfc\|_1, \nonumber \\
    \st ~~ & \bar{\bfR}\bar{\bfR}^T = \bfI_2,
\end{align}
where $\bfc=[c_1,\cdots,c_k]^T$ and $\|\bfc\|_1$ represents the $\ell_1$ norm of $\bfc$, which is the convex surrogate of the cardinality. $\|\cdot\|_F$ denotes the Frobenius norm of a matrix. The cost function terms in \refEq{eq:originalnoisy} correspond to the reprojection error and the sparsity of representation, respectively.

The optimization in \refEq{eq:originalnoisy} is nonconvex and there is an orthogonality constraint. A commonly-used strategy is the alternating minimization scheme, in which two steps are alternated: fixing $\bar{\bfR}$ and updating $\bfc$ by solving the $\ell_1$ minimization problem; and fixing $\bfc$ and updating $\bar{\bfR}$ using certain rotation representations such as the quaternions, the exponential map or a manifold representation. Note that the Procrustes method cannot be directly applied here since $\bar{\bfR}\in\RR{2}{3}$ is not a full rotation matrix and generally no closed-form solution exists \cite{edelman1998geometry}. Consequently, the whole algorithm may get stuck at local minima far away from the globally-optimal solution.

\subsection{Proposed Model}

We propose to use the following shape-space model:
\begin{align}\label{eq:new3dmodel}
\bfS = \sum_{i=1}^{k} c_i\bfR_i\bfB_i,
\end{align}
in which there is a rotation for each basis shape. The model in \refEq{eq:new3dmodel} implicitly accounts for the viewpoint variability and the projected 2D model is
\begin{align}\label{eq:new2dmodel}
\bfW = \Pi\sum_{i=1}^{k} c_i\bfR_i\bfB_i = \sum_{i=1}^{k} \bfM_i\bfB_i,
\end{align}
where $\bfM_i\in\RR{2}{3}$ is the product of $c_i$ and the first two rows of $\bfR_i$, which satisfies
\begin{align}\label{eq:orthogonality}
    \bfM_i\bfM_i^T = c_i^2\bfI_2.
\end{align}

The motivation of using the models in \refEq{eq:new3dmodel} and \refEq{eq:new2dmodel} is to achieve a linear representation of shape variability in 2D, such that we can get rid of the bilinear form in \refEq{eq:bilinear}, which is a necessary step towards a convex formulation.

The model in \refEq{eq:new2dmodel} is equivalent to the affine-shape model in existing literature \cite{blake2000active,xiao2004real}, which uses an augmented linear space to represent the shape variation in 2D caused by both intrinsic shape deformation and extrinsic viewpoint changes. This representation also appears in most NRSfM literature \cite{bregler2000recovering,paladini2012optimal}. As mentioned in \cite{xiao2004real}, the augmented linear space can represent any 2D shape produced by the 3D shape model projected into the image plane, but the increase of degree of freedom may result in invalid shapes. In this work, we try to reduce the possibility of invalid cases by enforcing the orthogonality constraint on $\bfM_i$s and the sparsity constraint on the number of activated basis shapes. We will show that these constraints can be conveniently imposed by minimizing a convex regularizer.

Next, we will consider to replace the orthogonality constraint in \refEq{eq:orthogonality} by its convex counterpart. The following lemma has been proven in literature \cite[Section 3.4]{journee2010generalized}:

\begin{lemma}\label{lemma1}
The convex hull of the Stiefel manifold $\mathcal{Q}=\left\{\bfX\in\RR{m}{n} | ~\bfX^T\bfX = \bfI_n\right\}$ equals the unit spectral-norm ball $\conv{\mathcal{Q}}=\left\{\bfX\in\RR{m}{n} | ~~\|\bfX\|_2 \leq 1 \right\}$. $\|\bfX\|_2$ denotes the spectral norm (a.k.a. the induced 2-norm) of a matrix $\bfX$, which is defined as the largest singular value of $\bfX$.
\end{lemma}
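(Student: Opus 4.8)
The plan is to establish the set equality by proving the two inclusions separately, treating $\conv{\mathcal{Q}} \subseteq \{\bfX : \norm{\bfX}{2}\le 1\}$ as the routine direction and the reverse containment as the substantive one. For the first inclusion I would note that the spectral norm is a genuine norm, hence convex, so its unit sublevel set $\{\bfX:\norm{\bfX}{2}\le 1\}$ is convex. Every $\bfX\in\mathcal{Q}$ satisfies $\bfX^T\bfX=\bfI_n$, so all of its singular values equal $1$ and $\norm{\bfX}{2}=1$; thus $\mathcal{Q}$ sits inside this convex ball, and therefore so does its convex hull.

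For the reverse inclusion, take any $\bfX$ with $\norm{\bfX}{2}\le 1$ and write its full singular value decomposition $\bfX = \bfU\boldsymbol{\Sigma}\bfV^T$, where $\bfU\in\RR{m}{m}$ and $\bfV\in\RR{n}{n}$ are orthogonal and the singular values $\sigma_1,\dots,\sigma_n$ on the diagonal of $\boldsymbol{\Sigma}$ lie in $[0,1]$. The first key step is to exhibit $\boldsymbol{\Sigma}$ itself as a convex combination of Stiefel matrices. For each diagonal sign pattern $\bfD=\diag(s_1,\dots,s_n)$ with $s_i\in\{+1,-1\}$, let $\tilde{\bfD}\in\RR{m}{n}$ be $\bfD$ padded with $m-n$ zero rows; then $\tilde{\bfD}^T\tilde{\bfD}=\bfD^2=\bfI_n$, so each $\tilde{\bfD}\in\mathcal{Q}$. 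Assigning the product weight $\lambda_{\bfD}=\prod_i \tfrac{1+s_i\sigma_i}{2}\ge 0$ yields $\sum_{\bfD}\lambda_{\bfD}=1$ and $\sum_{\bfD}\lambda_{\bfD}\tilde{\bfD}=\boldsymbol{\Sigma}$, since the induced coordinate expectations return exactly $\sigma_j$ on the diagonal and $0$ off it. The second step is to transport this back: because $\bfU$ and $\bfV$ are orthogonal, $(\bfU\tilde{\bfD}\bfV^T)^T(\bfU\tilde{\bfD}\bfV^T)=\bfV\tilde{\bfD}^T\tilde{\bfD}\bfV^T=\bfI_n$, so each $\bfU\tilde{\bfD}\bfV^T$ again lies in $\mathcal{Q}$, and $\bfX=\bfU\boldsymbol{\Sigma}\bfV^T=\sum_{\bfD}\lambda_{\bfD}\,\bfU\tilde{\bfD}\bfV^T$ exhibits $\bfX$ as a convex combination of elements of $\mathcal{Q}$.

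I expect the reverse inclusion to be the main obstacle, and within it the crux is finding the explicit convex decomposition of the diagonal $\boldsymbol{\Sigma}$; the product-measure weights above are the cleanest device I know, reducing everything to the scalar identity $\sigma=\tfrac{1+\sigma}{2}(1)+\tfrac{1-\sigma}{2}(-1)$ applied coordinatewise. As a sanity check and an alternative route, I would verify the claim through support functions: both sets are compact and convex, and the support function of each in a direction $\bfG$ equals the nuclear norm $\norm{\bfG}{*}$ --- for the ball because the nuclear norm is dual to the spectral norm, and for $\mathcal{Q}$ because $\max_{\bfX^T\bfX=\bfI_n}\trace{\bfG^T\bfX}=\norm{\bfG}{*}$ by the orthogonal Procrustes bound. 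Since a compact convex set is determined by its support function, the two sets coincide. This dual argument sidesteps the explicit combination but leans on the Procrustes identity, so I would keep the constructive proof as the primary one.
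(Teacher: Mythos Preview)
Your proof is correct in both the constructive route and the support-function alternative. The paper, however, does not supply its own proof of this lemma: it merely states the result and refers the reader to \cite[Section 3.4]{journee2010generalized}. So there is nothing to compare against beyond noting that your argument fills in what the paper leaves to the literature; the product-measure weights $\lambda_{\bfD}=\prod_i\frac{1+s_i\sigma_i}{2}$ give a clean explicit decomposition, and the dual argument via $\max_{\bfX^T\bfX=\bfI_n}\inner{\bfG}{\bfX}=\norm{\bfG}{*}$ is an equally valid shortcut.
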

Based on \refLemma{lemma1}, we have the following proposition:
\begin{proposition}
Given a scalar $s$, the convex hull of $\mathcal{S}=\left\{\bfY\in\RR{m}{n} | ~\bfY^T\bfY = s^2\bfI_n\right\}$ equals the spectral-norm ball with a radius of $|s|$: $\conv{\mathcal{S}}=\left\{\bfY\in\RR{m}{n} | ~~\|\bfY\|_2 \leq |s| \right\}$.
\end{proposition}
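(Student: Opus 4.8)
The plan is to reduce the statement to \refLemma{lemma1} through a scaling argument, after disposing of the trivial case $s=0$ separately. When $s=0$, the constraint $\bfY^T\bfY=s^2\bfI_n$ becomes $\bfY^T\bfY=\bfzero$, which forces $\bfY=\bfzero$; hence $\mathcal{S}=\{\bfzero\}$ and $\conv{\mathcal{S}}=\{\bfzero\}$, while the spectral-norm ball of radius $|s|=0$ is also $\{\bfzero\}$, so the two sides coincide.

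For $s\neq 0$, I would first identify $\mathcal{S}$ as a dilated copy of the Stiefel manifold $\mathcal{Q}$. The identity $\bfY^T\bfY=s^2\bfI_n$ holds if and only if $(\bfY/|s|)^T(\bfY/|s|)=\bfI_n$, that is, $\bfY/|s|\in\mathcal{Q}$; consequently $\mathcal{S}=|s|\,\mathcal{Q}$, where $|s|\,\mathcal{Q}=\{|s|\bfX \mid \bfX\in\mathcal{Q}\}$. The next step is to push this dilation through the convex-hull operation. Since scaling by a fixed scalar is a linear map and the image of a convex hull under a linear map is the convex hull of the image, I obtain $\conv{\mathcal{S}}=\conv{|s|\,\mathcal{Q}}=|s|\,\conv{\mathcal{Q}}$. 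Invoking \refLemma{lemma1}, which identifies $\conv{\mathcal{Q}}$ with the unit spectral-norm ball $\{\bfX \mid \norm{\bfX}{2}\leq 1\}$, gives $\conv{\mathcal{S}}=\{|s|\bfX \mid \norm{\bfX}{2}\leq 1\}$.

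Finally I would rewrite this rescaled ball as the ball of radius $|s|$ using the positive homogeneity of the spectral norm, $\norm{|s|\bfX}{2}=|s|\,\norm{\bfX}{2}$. Setting $\bfY=|s|\bfX$, the condition $\norm{\bfX}{2}\leq 1$ is equivalent to $\norm{\bfY}{2}\leq|s|$, so $\conv{\mathcal{S}}=\{\bfY \mid \norm{\bfY}{2}\leq|s|\}$, as claimed. This argument is essentially routine once \refLemma{lemma1} is available, so I do not expect a substantive obstacle; the only points demanding care are the commutation of the convex hull with the dilation and the separate handling of $s=0$, which is needed precisely because the identification $\mathcal{S}=|s|\,\mathcal{Q}$ relies on dividing by $|s|$.
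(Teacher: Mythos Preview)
Your proof is correct and follows essentially the same route as the paper, which simply remarks that the result is straightforward from the linear mapping $\bfY=s\bfX$ between $\mathcal{S}$ and $\mathcal{Q}$. Your version is more careful in that it treats the case $s=0$ separately and makes the commutation of the convex hull with dilation explicit, but the underlying idea is identical.
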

The proof is straightforward since there is a linear mapping between $\mathcal{S}$ and $\mathcal{Q}$ by $\bfY=s\bfX$.
\vspace{0.5em}

Consequently, the tightest convex relaxation to the constraint in \refEq{eq:orthogonality} is given by $\|\bfM_i\|_2 \leq |c_i|$.


Finally, with the modified shape model, the relaxed orthogonality constraint and the assumption of sparse representation, we propose to minimize the $\ell_1$-norm of the coefficient vector for shape recovery under noiseless cases:
\begin{align}
    \min_{c_1,\cdots,c_k,\bfM_1,\cdots,\bfM_k}~ & \sum_{i=1}^{k}|c_i|, \nonumber \\
    \st ~~~~~~~~~~ & \bfW = \sum_{i=1}^{k} \bfM_i\bfB_i, \nonumber \\
    & \|\bfM_i\|_2 \leq |c_i|, ~ \forall i\in[1,k]
\end{align}
which is obviously equivalent to the following problem:
\begin{align}\label{eq:finalnoiseless}
    \min_{\bfM_1,\cdots,\bfM_k}~ & \sum_{i=1}^{k}\|\bfM_i\|_2, \nonumber \\
    \st ~~~~ & \bfW = \sum_{i=1}^{k} \bfM_i\bfB_i.
\end{align}
The formulation in \refEq{eq:finalnoiseless} is a linear inverse problem, where we estimate a set of orthogonal matrices by minimizing their spectral norms. Interestingly, the conditions for exact recovery using such a convex program has been theoretically analyzed in \cite{chandrasekaran2012convex}. We will provide numerical results to demonstrate the exact recovery property in \refSec{sec:simulation}.

Considering noise in real applications, we can solve:
\begin{align}\label{eq:finalnoisy}
    \min_{\bfM_1,\cdots,\bfM_k}~ & \half \left\| \bfW - \sum_{i=1}^{k} \bfM_i\bfB_i \right\|_{F}^2 + \lambda \sum_{i=1}^{k}\|\bfM_i\|_2.
\end{align}
The problem \refEq{eq:finalnoisy} is our final formulation. It is a penalized least-squares problem. We have following remarks:
\begin{itemize}
\item[1.] The problem in \refEq{eq:finalnoisy} is convex programming, which can be solved globally. We will provide an efficient algorithm to solve it in \refSec{sec:optimization}.
\item[2.] Notice that $\|\cdot\|_2$ in the above formulations denotes the spectral norm of a matrix instead of the $\ell_2$-norm of a vector. As we will show in \refSec{sec:optimization}, minimizing the spectral norm of a matrix is equivalent to minimizing the $\ell_{\infty}$-norm of the vector of singular values, which will simultaneously shrink the norm of the matrix towards zero and enforce its singular values to be equal. Therefore, by spectral-norm minimization, we can not only minimize the number of activated basis shapes but also enforce each transformation matrix $\bfM_i$ to be orthogonal (an orthogonal matrix has equal singular values).
\item[3.] In practice, we may estimate $\bfM_i$s by only considering reprojection errors at visible landmarks, i.e., including a binary weight matrix in the first term of \refEq{eq:finalnoisy}. The missing landmarks can be hallucinated from the reconstructed shape model as their locations are known on the basis shapes.
\end{itemize}

\subsection{Reconstruction}

After solving \refEq{eq:finalnoisy}, we recover $c_i$ and $\bfR_i$ from the estimated $\bfM_i$, and reconstruct the 3D shape by \refEq{eq:new3dmodel}. Specifically, $c_i=\|\bfM_i\|_2$ and $\bar{\bfR}_i=\bfM_i/c_i$. Note that $c_i=-\|\bfM_i\|_2$ is also a feasible solution. To eliminate the ambiguity, we assume that $c_i\geq 0$ and impose this constraint when training the shape dictionary. Finally, the third row of $\bfR_i$ is recovered by the cross product of the rows in $\bar{\bfR}_i$.


\section{Optimization}\label{sec:optimization}

\subsection{Proximal operator of the spectral norm}\label{sec:proximal}

Before deriving the specific algorithm to solve \refEq{eq:finalnoisy}, we first prove the following proposition, which will serve as an important building block in our algorithm.
\begin{proposition}\label{prop:prox2norm}
The solution to the following problem
\begin{align}\label{eq:prox-2norm}
\min_{\bfX} ~ \half \|\bfY-\bfX\|_F^2 + \lambda \|\bfX\|_2
\end{align}
is given by $\bfX^*=\mathcal{D}_{\lambda}(\bfY)$, where
\begin{align}
\mathcal{D}_{\lambda}(\bfY) &= \bfU_Y~\diag\left[\bfsigma_Y - \lambda\proj_{\ell_1}(\bfsigma_Y/\lambda)\right]~\bfV_Y^T,
\end{align}
$\bfU_Y$, $\bfV_Y$ and $\bfsigma_Y$ denote the left singular vectors, right singular vectors and the singular values of $\bfY$, respectively. $\proj_{\ell_1}$ is the projection of a vector to the unit $\ell_1$-norm ball.
\end{proposition}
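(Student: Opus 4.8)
The plan is to recognize this as computing the proximal operator of the spectral norm, and to solve it by reducing the matrix problem to a problem on singular values via a von Neumann--type argument. First I would invoke the unitary invariance of both the Frobenius norm and the spectral norm: since $\|\bfY-\bfX\|_F^2$ and $\|\bfX\|_2$ are both invariant under $\bfX\mapsto \bfU\bfX\bfV^T$ for orthogonal $\bfU,\bfV$, the classical result that for a fixed $\bfY$ the quantity $\|\bfY-\bfX\|_F^2$ is minimized (over all $\bfX$ with prescribed singular values) when $\bfX$ shares the singular vectors of $\bfY$ should let me restrict attention to $\bfX = \bfU_Y\,\diag(\bfx)\,\bfV_Y^T$. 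This is von Neumann's trace inequality / the Hoffman--Wielandt-type alignment step, and it reduces \refEq{eq:prox-2norm} to the vector problem
\begin{align}\label{eq:vecprox}
\min_{\bfx} ~ \half \|\bfsigma_Y - \bfx\|_2^2 + \lambda \|\bfx\|_\infty,
\end{align}
since the spectral norm equals the $\ell_\infty$ norm of the singular-value vector.

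**Solving the reduced vector problem via Moreau decomposition.**

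Next I would solve \refEq{eq:vecprox}, which is exactly the proximal operator of $\lambda\|\cdot\|_\infty$ evaluated at $\bfsigma_Y$. The cleanest route is the Moreau decomposition: since the $\ell_\infty$ norm and the $\ell_1$ norm are dual (the convex conjugate of $\|\cdot\|_\infty$ is the indicator of the unit $\ell_1$ ball), we have the identity
\begin{align}
\prox_{\lambda\|\cdot\|_\infty}(\bfsigma_Y) = \bfsigma_Y - \prox_{(\lambda\|\cdot\|_\infty)^*}(\bfsigma_Y) = \bfsigma_Y - \lambda\,\proj_{\ell_1}(\bfsigma_Y/\lambda),
\end{align}
where the last equality uses that the proximal operator of the support function of the unit $\ell_1$ ball is Euclidean projection onto that ball, together with the standard scaling rule $\prox_{\lambda f}(\bfv)$ applied to a positively homogeneous $f$. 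This yields precisely the bracketed term $\bfsigma_Y - \lambda\proj_{\ell_1}(\bfsigma_Y/\lambda)$ in the statement.

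**Assembling and handling technicalities.**

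Reinserting the optimal singular-value vector into $\bfX = \bfU_Y\,\diag(\cdot)\,\bfV_Y^T$ gives the claimed formula $\mathcal{D}_\lambda(\bfY)$. I would close by noting uniqueness: the objective in \refEq{eq:prox-2norm} is strongly convex in $\bfX$ through the Frobenius term, so the minimizer is unique, which justifies writing $\bfX^*$ as a single well-defined value. The main obstacle I anticipate is not the vector computation but making the dimension-reduction step fully rigorous: one must argue carefully that an optimal $\bfX$ can be taken to share \emph{both} sets of singular vectors with $\bfY$, which requires the von Neumann trace inequality to control the cross term $\inner{\bfY}{\bfX}$ and, in the case of repeated or zero singular values of $\bfY$, a careful choice of the (non-unique) singular vectors so that the resulting $\diag(\bfx)$ has nonnegative entries in the correct order. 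A secondary technical point is verifying that the optimal $\bfx$ produced by the Moreau step is indeed sorted compatibly with $\bfsigma_Y$ and nonnegative, so that $\diag(\bfx)$ together with $\bfU_Y,\bfV_Y$ forms a genuine singular value decomposition; this follows because $\proj_{\ell_1}$ preserves ordering and sign on a nonnegative, sorted input, but it should be stated explicitly.
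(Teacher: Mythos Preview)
Your proposal is correct and follows essentially the same route as the paper: reduce the matrix proximal problem to the vector proximal of the $\ell_\infty$ norm on the singular values, then apply Moreau decomposition with the $\ell_1$/$\ell_\infty$ duality. The only difference is that the paper cites the spectral-function proximal identity from \cite{parikh2013proximal} directly, whereas you justify that reduction yourself via unitary invariance and von Neumann's trace inequality; your added remarks on uniqueness and on $\proj_{\ell_1}$ preserving sign and ordering are welcome technical hygiene that the paper omits.
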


\begin{proof}
The problem in \refEq{eq:prox-2norm} is a proximal problem \cite{parikh2013proximal}. The proximal problem associated with a function $F$ is defined as
\begin{align}
\prox_{\lambda F}(\bfY) = \arg\min_{\bfX} \half \|\bfY-\bfX\|_F^2 + \lambda F(\bfX),
\end{align}
with the solution denoted by $\prox_{\lambda F}(\bfY)$ and named the proximal operator of $F$.

For the problem \refEq{eq:prox-2norm}, $F(\bfX)=\|\bfX\|_2=\|\bfsigma_X\|_{\infty}$, where $\|\cdot\|_{\infty}$ means the $\ell_{\infty}$ norm. It says that $F$ is a spectral function operated on singular values of a matrix. Based on the property of spectral functions \cite[Section 6.7.2]{parikh2013proximal}, we have
\begin{align}
\prox_{\lambda F}(\bfY) = \bfU_Y~\diag\left[\prox_{\lambda f}(\bfsigma_Y)\right]~\bfV_Y^T,
\end{align}
where $f$ is the $\ell_{\infty}$-norm. The proximal operator of the $\ell_{\infty}$-norm can be computed by Moreau decomposition \cite[Section 6.5]{parikh2013proximal}:
\begin{align}
\prox_{\lambda f}(\bfsigma_Y) = \bfsigma_Y - \lambda\proj_{\ell_1}(\bfsigma_Y/\lambda),
\end{align}
given that the $\ell_1$-norm is the dual norm of the $\ell_{\infty}$-norm.
\end{proof}

\subsection{Algorithms}\label{sec:alg}

We present the algorithm to solve \refEq{eq:finalnoisy}. The noiseless case \refEq{eq:finalnoiseless} can be solved similarly. Our algorithm is based on the Alternating Direction Method of Multipliers (ADMM) \cite{boyd2010distributed} and the proximal operator of the spectral norm.

We first introduce an auxiliary variable $\bfZ$ and rewrite \refEq{eq:finalnoisy} as follows
\begin{align}\label{eq:admm-noisy}
    \min_{\widetilde{\bfM},\widetilde{\bfZ}}~ & \half \left\| \bfW - \bfZ\widetilde{\bfB}  \right\|_{F}^2 + \lambda \sum_{i=1}^{k}\|\bfM_i\|_2, \nonumber \\
    \st ~& \widetilde{\bfM} = \bfZ,
\end{align}
where we concatenate $\bfM_1,\cdots,\bfM_k$ as column-triplets of $\widetilde{\bfM}$ and $\bfB_1,\cdots,\bfB_k$ as row-triplets of $\widetilde{\bfB}$.

The augmented Lagrangian of \refEq{eq:admm-noisy} is
\begin{align}
    \mathcal{L}_{\mu}\left(\widetilde{\bfM},\bfZ,\bfY\right) &= \half \left\| \bfW - \bfZ\widetilde{\bfB}  \right\|_{F}^2 + \lambda \sum_{i=1}^{k}\|\bfM_i\|_2 \nonumber \\
    & + \left<\bfY,\widetilde{\bfM}-\bfZ\right> + \frac{\mu}{2}\left\| \widetilde{\bfM} - \bfZ \right\|_F^2,
\end{align}
where $\bfY$ is the dual variable and $\mu$ is a parameter controlling the step size in optimization. Then, the ADMM alternates the following steps until convergence:
\begin{align}
    &\widetilde{\bfM}^{t+1} = \arg\min_{\widetilde{\bfM}}\mathcal{L}_{\mu}\left(\widetilde{\bfM},\bfZ^t,\bfY^t\right); \label{eq:admm1} \\
    &\bfZ^{t+1} = \arg\min_{\bfZ}\mathcal{L}_{\mu}\left(\widetilde{\bfM}^{t+1},\bfZ,\bfY^t\right); \label{eq:admm2} \\
    &\bfY^{t+1} = \bfY^{k} + \mu~\left(\widetilde{\bfM}^{t+1}-\bfZ^{t+1}\right). \label{eq:admm3}
\end{align}

For the step in \refEq{eq:admm1}, we have
\begin{align}\label{eq:admm11}
&\min_{\widetilde{\bfM}}\mathcal{L}_{\mu}\left(\widetilde{\bfM},\bfZ^t,\bfY^t\right) \nonumber \\
=&\min_{\widetilde{\bfM}} \frac{1}{2}\left\| \widetilde{\bfM} - \bfZ^t + \frac{1}{\mu}\bfY^t \right\|_F^2 + \frac{\lambda}{\mu} \sum_{i=1}^{k}\|\bfM_i\|_2 \nonumber \\
=&\min_{\bfM_1,\cdots,\bfM_k} \sum_{i=1}^{k} \left\{ \frac{1}{2}\left\| \bfM_i - \bfQ_i^t \right\|_F^2 + \frac{\lambda}{\mu}\|\bfM_i\|_2 \right\},
\end{align}
where $\bfQ_i^t$ is the $i$-th column-triplet of $\bfZ^t - \frac{1}{\mu}\bfY^t$. Therefore, we can update each $\bfM_i$ by solving a proximal problem based on \refProp{prop:prox2norm}:
\begin{align}
\bfM_i^{t+1} = \mathcal{D}_{\frac{\lambda}{\mu}}(\bfQ_i^t), ~~\forall i\in[1,k].
\end{align}

For the step in \refEq{eq:admm2}, $\mathcal{L}_{\mu}\left(\widetilde{\bfM}^{t+1},\bfZ,\bfY^t\right)$ is a quadratic form of $\bfZ$ and admits the following closed-form solution:
{\small
\begin{align}
\bfZ^{t+1} = \left( \bfW\widetilde{\bfB}^T+\mu\widetilde{\bfM}^{t+1}+\bfY^t \right) \left( \widetilde{\bfB}\widetilde{\bfB}^T+\mu\bfI \right)^{-1}.
\end{align}
}

It can be proven that the sequences of values produced by the ADMM iterations in \refEq{eq:admm1} to \refEq{eq:admm3} converge to the optimal solutions of the primal problem in \refEq{eq:admm-noisy} \cite{boyd2010distributed}, which are also the optimal solutions to the original problem in \refEq{eq:finalnoisy}.


\section{Experiments}\label{sec:experiments}

\subsection{Simulation}\label{sec:simulation}

\begin{figure}
\centering
\includegraphics[width=0.7\linewidth]{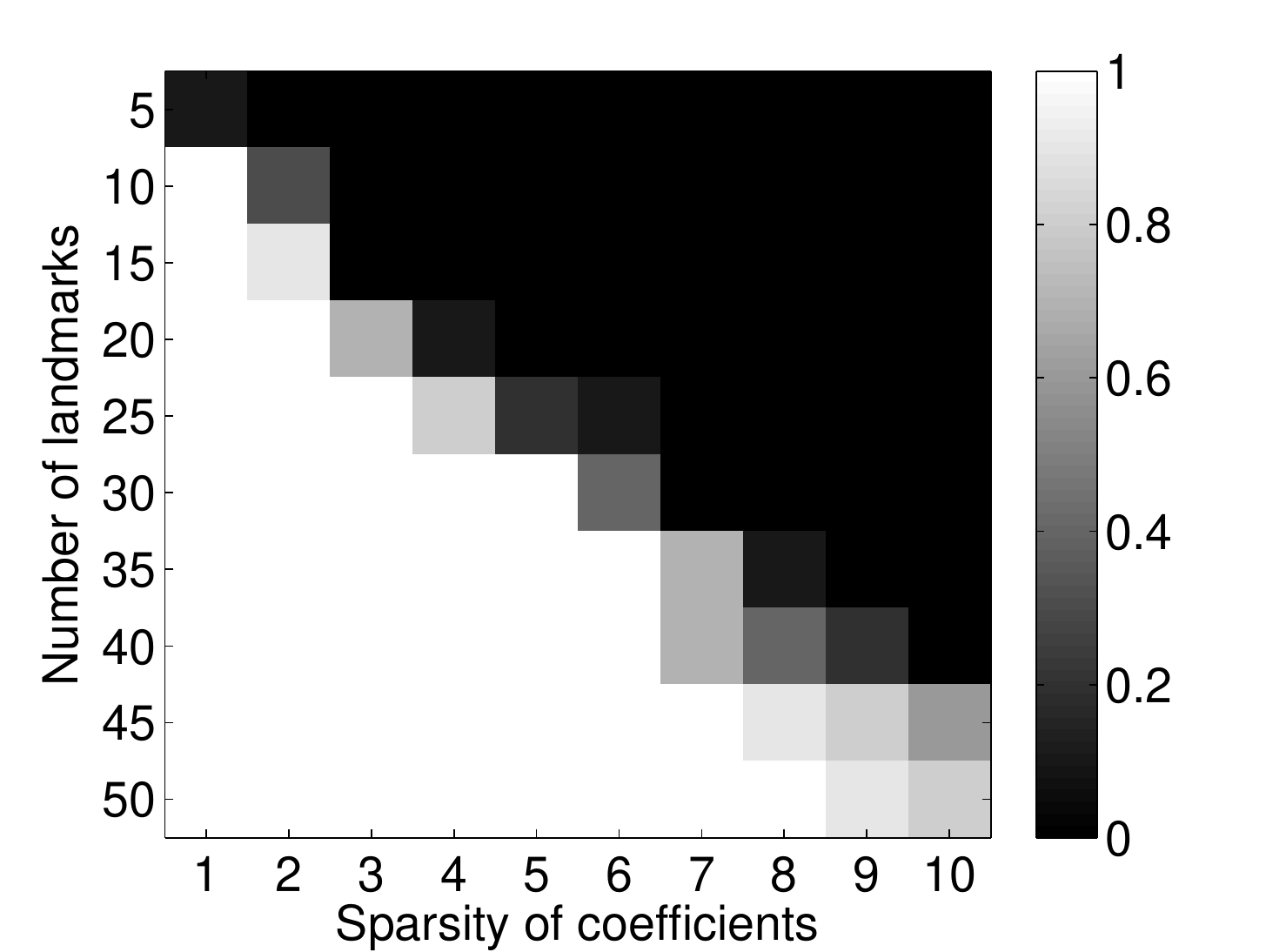}
\caption{The frequency of exact recovery on synthesized data.} \label{fig:freqexact}
\vspace{-1em}
\end{figure}

We aim to investigate whether the spectral-norm minimization in \refEq{eq:finalnoiseless} can exactly solve the ill-posed inverse problem based on the prior knowledge of sparsity and orthogonality under noiseless cases.

More specifically, we randomly simulate $k$ basis shapes $\bfB_1,\cdots,\bfB_k\in\RR{3}{p}$ ($p$ is varying, $k=50$) with entries sampled independently from the normal distribution $\mathcal{N}(0,1)$, and simulate $k$ rotation matrices $\bfR_1,\cdots,\bfR_k$ as well as coefficients $c_1,\cdots,c_k$. Only $z$ randomly-selected coefficients are nonzero with values sampled from the uniform distribution $\mathcal{U}(0,1)$. Then, $\bfM_i=c_i\bar{\bfR}_i\in\RR{2}{3}$ and $\bfW =\sum_{i=1}^{k}\bfM_i\bfB_i$. We use $\bfW$ as the input and solve \refEq{eq:finalnoiseless} to estimate $\bfM_i$s. The solution is regarded as exact if $\|\hat\bfM-\widetilde{\bfM}\|_F/\|\widetilde{\bfM}\|_F < 10^{-3}$, where we concatenate $\bfM_i$s in $\widetilde{\bfM}$, and $\hat\bfM$ is the algorithm estimate.

\refFig{fig:freqexact} reports the frequency of exact recovery with varying $p$ (number of landmarks) and $z$ (sparsity of the underlying coefficients), which is evaluated over 10 randomly-generated instances for each setting. Note that the number of unknowns ($6k$) is much larger than the number of equations ($2p$). The proposed convex program can exactly solve the problem with a frequency equal to 1 in the lower-triangular area, where the number of landmarks is sufficiently large and the coefficients are truly sparse. This demonstrates the power of convex relaxation, which has proven to be successful in various inverse problems, e.g., compressed sensing \cite{candes2008introduction} and matrix completion \cite{candes2010power}. The performance drops in more difficult cases in the upper-triangular area. This observation is analogous to the phase transition in compressive sensing, where the recovery probability also depends on the number of observations and the underlying signal sparsity \cite{donoho2009observed}.

\subsection{Applications}

\subsubsection{Human Pose Estimation}

\begin{figure}
  \centering
  \includegraphics[width=0.9\linewidth]{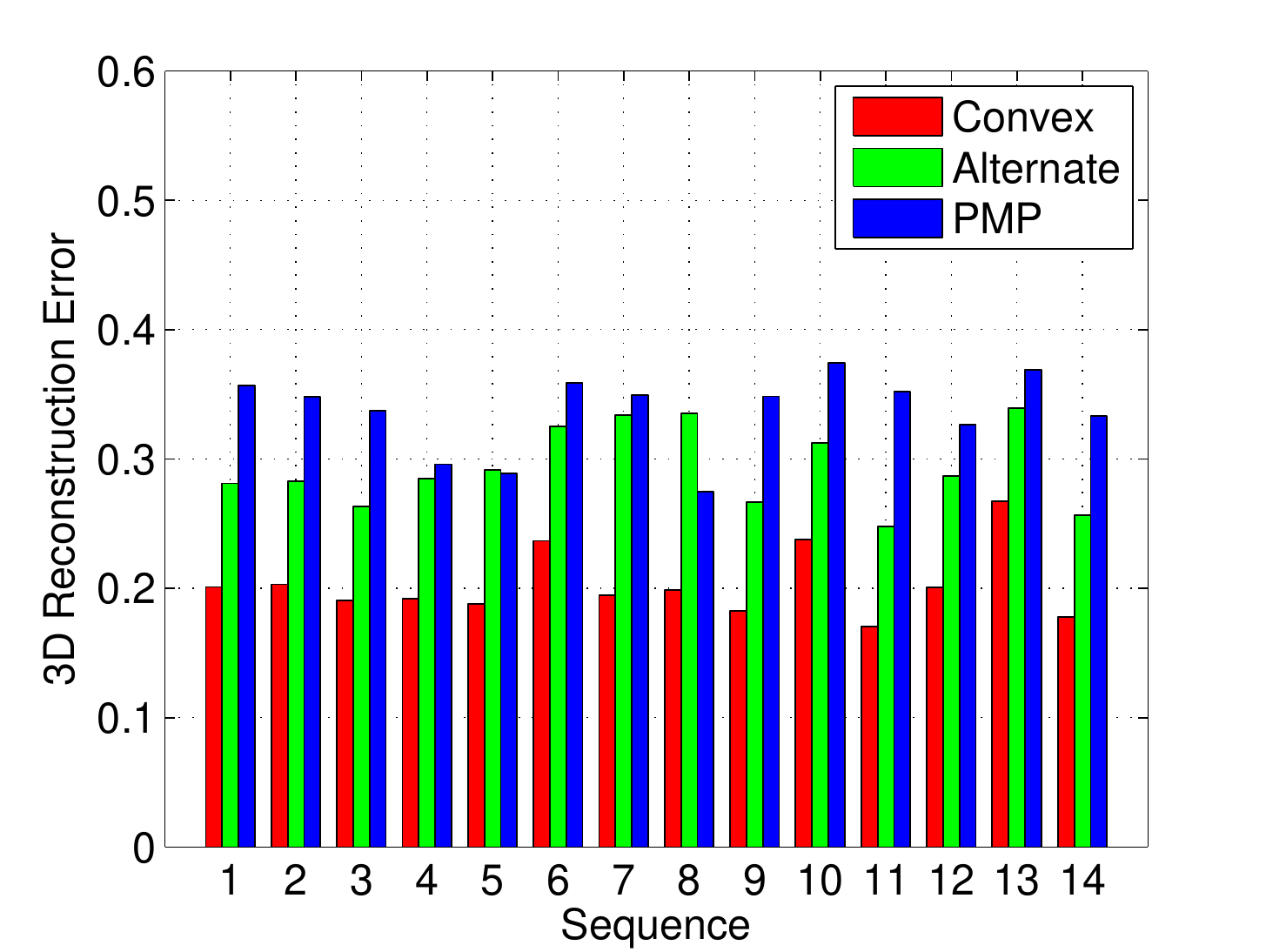}\\
  \caption{The mean reconstruction error for each sequence of Subject 15 from the MoCap dataset. Three methods are compared: ``Convex" denotes the proposed convex method; ``Alternate" means the alternating minimization method; ``PMP" represents the method proposed in \cite{ramakrishna2012reconstructing}.}\label{fig:barplot-human}
\end{figure}

\begin{table}
\renewcommand{\arraystretch}{1.2}
\centering
\begin{tabular}{llll}
\toprule
 &	Convex & Alternate & PMP \\ [0.5ex]
\hline
Subject 13 & 0.259 & 0.293 & 0.390 \\
Subject 14 & 0.258 & 0.308 & 0.393 \\
Subject 15 & 0.204 & 0.286 & 0.340 \\
\bottomrule
\end{tabular}
\vspace{1em}
\caption{The mean errors over all sequences of three subjects from the MoCap dataset.}
\label{tab:human}
\vspace{-1em}
\end{table}

The applicability of sparse shape representation for 3D human pose recovery has been thoroughly studied in previous work \cite{ramakrishna2012reconstructing,wang2014robust,fan2014pose}. In this paper, we aim to illustrate the advantage of the proposed convex program compared to the alternating minimization widely used in previous work. We carry out evaluation on the MoCap dataset \cite{mocap} and use the sequences from Subject 86 as training data and the sequences from Subject 13, 14 and 15 as testing data. All of the selected subjects are conducting a large variety of activities such as running, jumping, boxing, basketball, etc.

Since there are thousands of training shapes, using all of them as basis shapes is impractical. For our method, we solve the following problem to learn a shape dictionary:
\begin{align}\label{eq:dl}
    \min_{\bfB_1,\cdots,\bfB_k,\bfC}~ & \sum_{j=1}^{n} \half \| \bfS_j - \sum_{i=1}^{k} C_{ij}\bfB_i \|_{F}^2 + \beta \sum_{i,j}C_{ij} \nonumber \\
    \st & ~C_{ij} \geq 0, ~ \|\bfB_i\|_F \leq 1, \nonumber \\
    &~ \forall i\in[1,k], ~ j\in[1,n],
\end{align}
where $\bfB_i$s are the basis shapes to be learned, $\bfS_i$s denote the training shapes (aligned by the Procrustes method), and $C_{ij}$ represents the $i$-th coefficient for the $j$-th training shape. We initialize the dictionary by uniformly selecting $k$ shapes from the training data and locally solving \refEq{eq:dl} by alternately updating $\bfC$ and $\bfB_i$s, a strategy widely used in dictionary learning literature \cite{mairal2010online}. We use the 15 joints model as shown in \refFig{fig:vis-human} and set $k=64$

\begin{figure}
  \centering
  \includegraphics[width=0.7\linewidth]{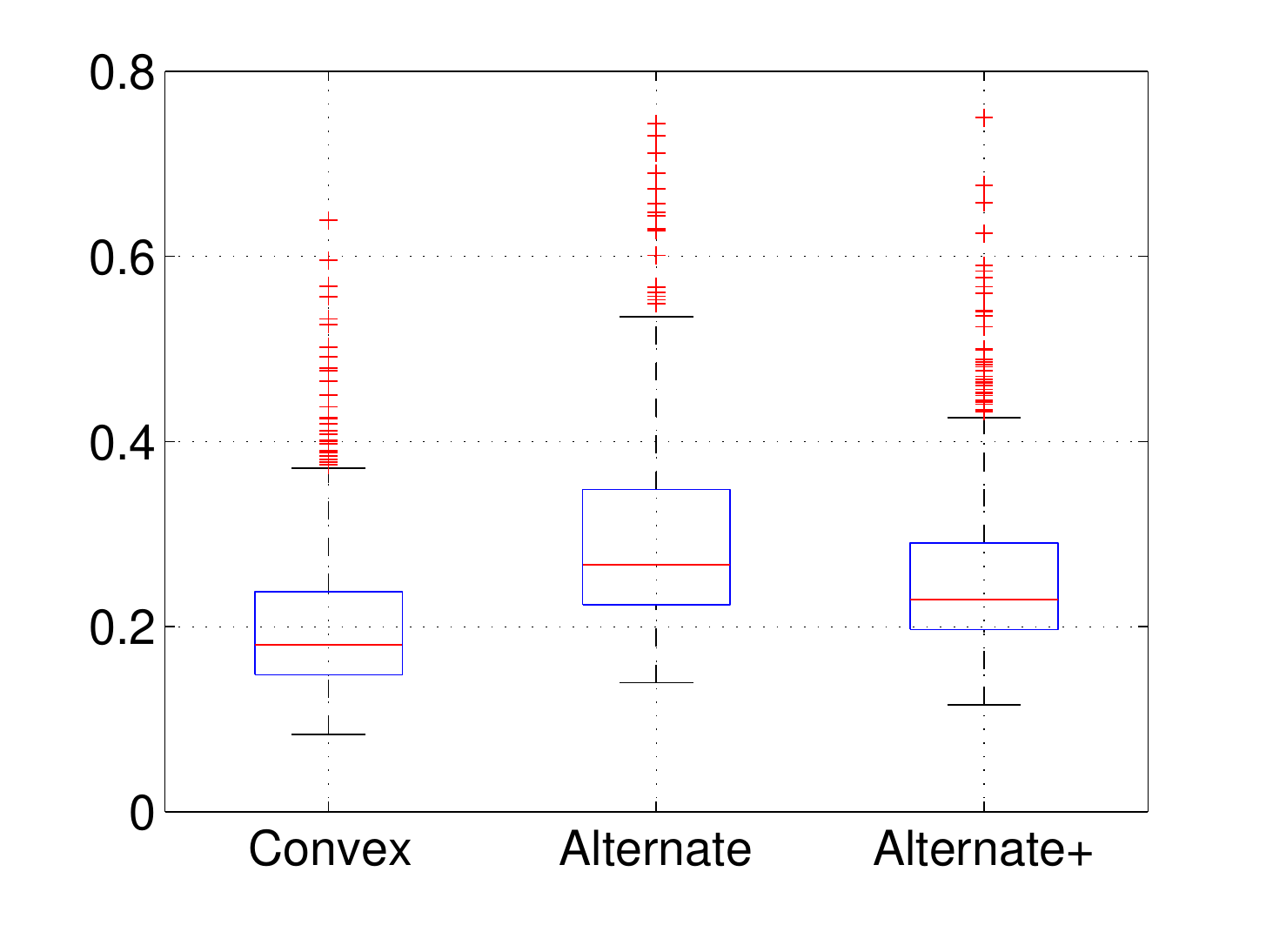}\\
  \caption{The barplots of estimation errors on the MoCap dataset (Subject 15) for the proposed method (``Convex"), the alternating minimization (``Alternate") and the alternating minimization initialized by the convex method (``Alternate+"). }\label{fig:boxplot-human}
  \vspace{-1em}
\end{figure}

We compare the proposed method to Projected Matching Pursuit (PMP) from Ramakrishna et al. \cite{ramakrishna2012reconstructing}\footnote{The code is downloaded from the authors' website \url{http://www.cs.cmu.edu/~vramakri/research.html}}. We also implement an alternating minimization method that solves the model in \refEq{eq:originalnoisy} by alternately updating the shape parameter $\bfc$ via $\ell_1$ minimization and updating the pose parameter $\bar{\bfR}$ via manifold optimization. The manifold optimization is implemented with the Manopt toolbox \cite{boumal2014manopt} to update $\bar{\bfR}$ by the trust-region algorithm over the Stiefel manifold. The alternating minimization is initialized by the mean shape of the training shapes. For both of the proposed method and the alternating minimization method, we set the regularization parameter as $\lambda=0.1$ for all sequences.

The reconstruction error is evaluated by the Euclidean distance between the reconstructed shape and the true shape up to a similarity transformation. The mean errors for the 14 testing sequences from Subject 15 are shown in \refFig{fig:barplot-human}. The subject is conducting various activities in different sequences \cite{mocap}. The proposed convex algorithm clearly outperforms the alternative methods and achieves a stable performance for all sequences. The mean error over all of the sequences for each subject is given in \refTab{tab:human}.

\begin{figure*}
  \centering
  Input \hspace{4em} Ground truth \hspace{5em} Convex \hspace{6em} Alternate \hspace{6em} PMP \\
  \includegraphics[width=0.98\linewidth]{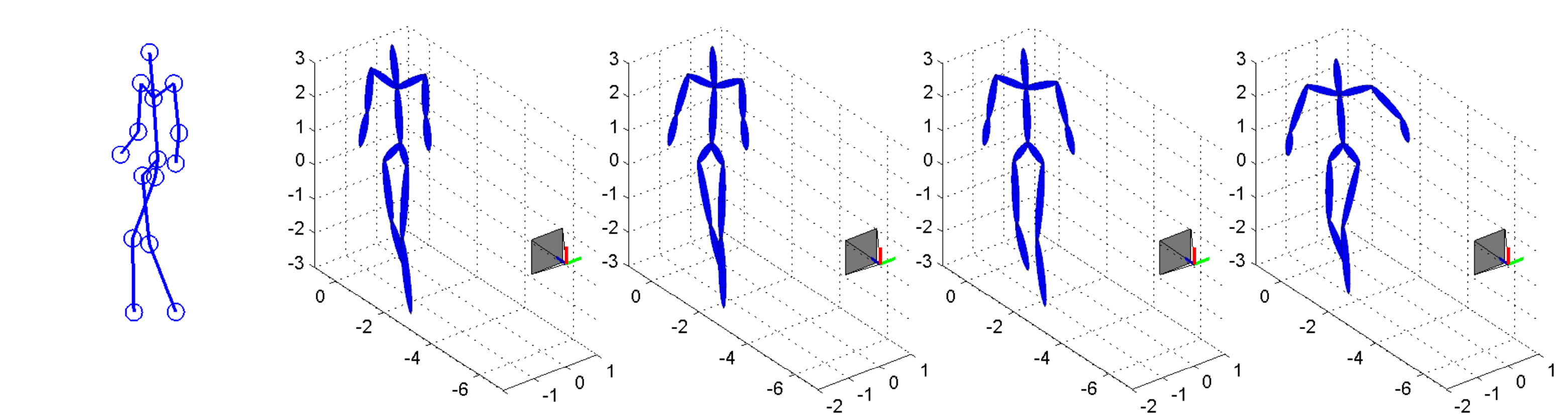}\\
  \includegraphics[width=0.98\linewidth]{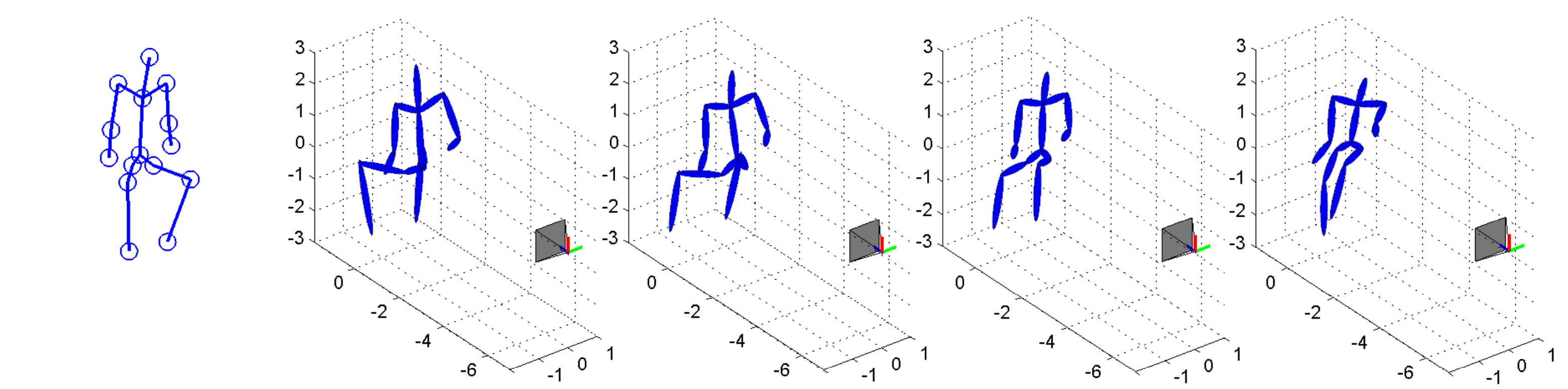}\\
  \includegraphics[width=0.98\linewidth]{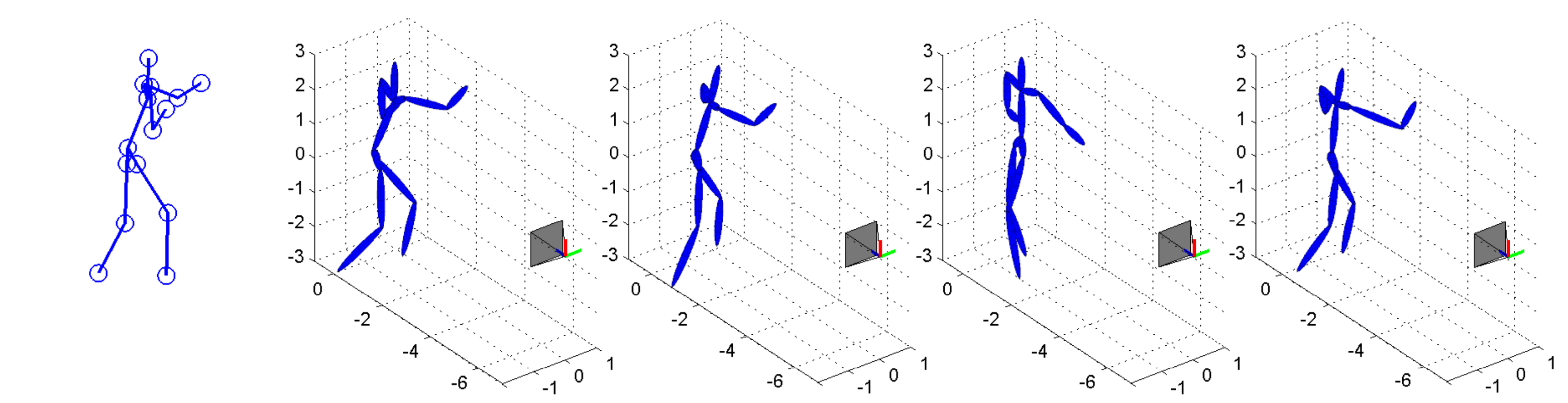}\\
  \caption{Examples of human pose estimation. The columns from left to right correspond to the input 2D landmarks, the ground-truth 3D pose, and the reconstructions from the proposed method, the alternating minimization, and the PMP method \cite{ramakrishna2012reconstructing}, respectively.}
  \label{fig:vis-human}
\end{figure*}

To verify that the alternating minimization depends on initialization, we initialize the alternating minimization with the solution of our method. The results for Subject 15 are shown in \refFig{fig:boxplot-human}. The error of the alternating minimization is apparently decreased with a smaller variance by using the better initialization. The mean objective values of alternating minimization with and without the convex initialization are 0.17 and 0.24, respectively\footnote{The objective of the convex formulation is different and therefore not compared.}. The accuracy of ``Alternate+" is worse than the convex formulation. This might be attributed to the fact that the shape model in \refEq{eq:new3dmodel} offers more degree of freedom than the original model in \refEq{eq:shapespace} to represent complex deformation of a human skeleton.

The reconstructed poses for three selected frames are visualized in \refFig{fig:vis-human}. We can see that all methods perform well in the first example, where the shape (walking) is close to the mean shape (standing straight). But the accuracies of the alternative methods degrade in the other two examples, where the shape is far away from the mean shape, while our method still obtains appealing reconstructions.

\subsubsection{Car Reconstruction}


\begin{figure*}
  \centering
  \includegraphics[width=\linewidth]{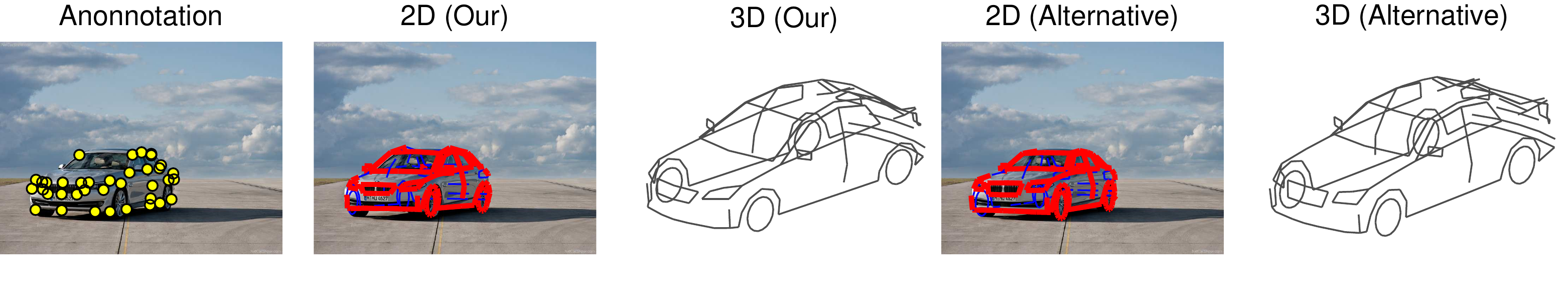}\\[-2EX]
  \includegraphics[width=\linewidth]{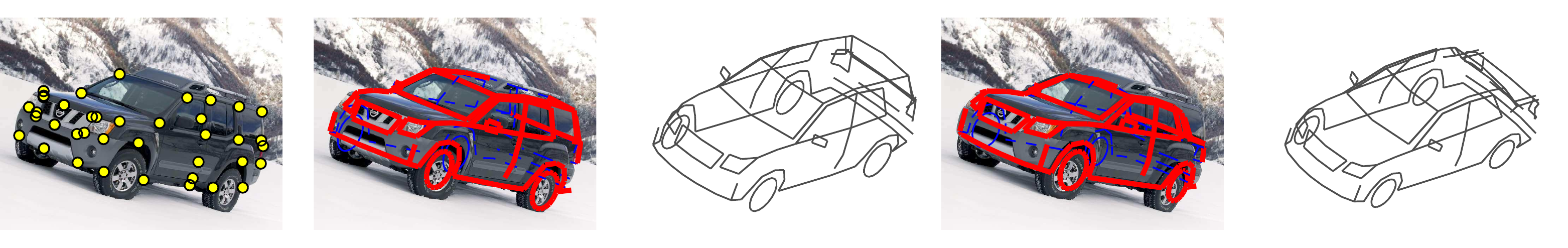}\\
  \includegraphics[width=\linewidth]{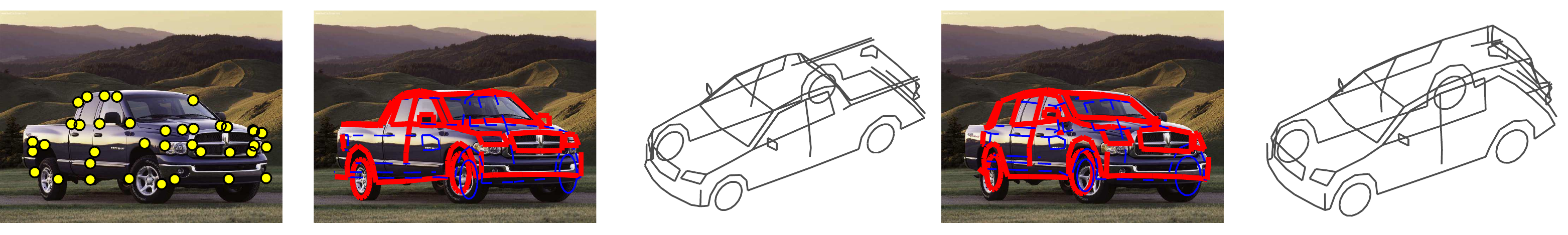}
  \caption{Examples of car reconstruction. The columns from left to right correspond to the input 2D landmarks, the 2D fitted models and 3D reconstructions of the proposed method, and the results of the alternative method \cite{lin2014jointly}, respectively. Only visible landmarks ($\sim40$ per image) are used for shape fitting. The 3D models are visualized in novel views. The car models from top to bottom are the BMW 5 Series 2011 (sedan), the Nissan Xterra 2005 (SUV) and the Dodge Ram 2003 (pick-up truck), respectively.} \label{fig:vis-car}
\end{figure*}

We demonstrate the applicability of the proposed method for 3D car shape estimation using the recently-published Fine-Grained 3D Car dataset \cite{lin2014jointly}, which provides images of cars, 2D landmark annotations and corresponding 3D models. We concatenate the 3D models of 15 cars as the shape dictionary and try to reconstruct the 3D models of other cars from the visible landmarks annotated in the images ($\sim$40 points per image). The 3D models provided in the dataset were reconstructed by the authors instead of true CAD models. Therefore, we only show some qualitative results. As illustrated in \refFig{fig:vis-car}, our method can successfully reconstruct the models of various classes such as sedan, SUV and pick-up truck. For comparison, we also show the results of an alternative method proposed in the original paper \cite{lin2014jointly}, which uses the perspective camera model and nonlinear optimization. The alternative method initialized by the mean shape performs well in the sedan example but relatively poor in the SUV and truck examples, where the models deviate far away from the mean shape. Similar results were reported in the original paper \cite{lin2014jointly} and the authors proposed to use the class-specific mean shape for better initialization. Instead, our method can achieve appealing results with arbitrary initialization.

\subsection{Computational Time}

Our algorithm is implemented in MATLAB and tested on a desktop with a Intel i7 3.4GHz CPU and 8G RAM. In our experiments, the ADMM algorithm generally converges within 500 iterations to reach a stopping criterion of $10^{-4}$. In the experiments of human pose estimation, for example, the computational time of our algorithm is 0.33s per frame, while those of the alternating minimization and the PMP algorithm \cite{ramakrishna2012reconstructing} are 0.44s and 3.02s, respectively.

\section{Discussion}\label{sec:discussion}

In summary, we proposed a method for aligning a 3D shape-space model to 2D landmarks by solving a convex program, which guarantees global optimality. Intuitively, we adopted an augmented 3D shape model to achieve a linear representation of shape variability in 2D and proposed to use the spectral-norm regularization to penalize invalid cases caused by the augmentation.

The exactness of using convex relaxation for linear inverse problems with various assumptions, e.g., sparsity and orthogonality, has been theoretically analyzed in literature, e.g., \cite{chandrasekaran2012convex}. In our experiments, we observed that the estimates satisfied the original constraints in most cases, and all reported results were the outputs of the proposed algorithm without refinement. In cases where the relaxation is not tight, postprocessing steps may be employed to enforce the exactness, e.g., projecting the estimated rotation matrix to $SO(3)$ or forcing the basis shapes to share the same rotation. This might be helpful in real applications of modeling rigid objects, although we did not use them in our experiments.

In this paper, we assume that the 2D landmarks and 3D-2D correspondences are given. Our method can be naturally extended to handle large errors in landmark localization in practice. For examples, the $\ell_1$-norm can be used to replace the squared loss in \refEq{eq:finalnoisy} to make the model more robust against outliers, and the optimization can be solved by ADMM as well. Another possible solution is to use RANSAC as proposed in \cite{li2011robustly}, since the shape model can be estimated using only a portion of the landmarks. Also, there is a great potential to integrate the proposed shape model with existing landmark-localization methods to simultaneously localize 2D landmarks and recover shapes.

\hspace{1em}

\noindent\textbf{Acknowledgments}: Grateful for support through the following grants: NSF-DGE-0966142,
NSF-IIS-1317788, NSF-IIP-1439681, NSF-IIS-1426840, ARL MAST-CTA
W911NF-08-2-0004, ARL RCTA W911NF-10-2-0016, and ONR N000141310778. Xiaoyan Hu was supported by NSFC (No.61103086 and 61170186).


\newpage
\bibliographystyle{ieee}
\footnotesize
\bibliography{mybib-abbr}

\end{document}